\newcommand{\beq}{\begin{equation}}
\newcommand{\eeq}{\end{equation}}
\newcommand{\bal}{\begin{aligned}}
\newcommand{\eal}{\end{aligned}}
\newcommand{\nodeSet}{\mathcal{N}}
\newcommand{\arcSet}{\mathcal{A}}
\newcommand{\layerSet}{\mathcal{N}}
\newcommand{\layer}{\ell}
\newcommand{\rootnode}{{\bf r}}
\newcommand{\terminalnode}{{\bf t}}
\newcommand{\arcvalue}{w}
\newcommand{\arcdomain}{d}
\newcommand{\pathvalue}{w}
\newcommand{\pathSet}{\mathcal{P}}
\newcommand{\solutionSet}{\mathrm{Sol}}
\newcommand{\bddwidth}{W}
\newcommand{\optproblem}{\mathscr{P}}
\newcommand{\DD}{\mathcal{D}}
\newcommand{\statespace}{\mathcal{S}}
\newcommand{\state}{S}
\newcommand{\transition}{f}
\newcommand{\transitioncost}{g}
\newcommand{\feasibledecisions}{X}
\newcommand{\RN}[1]{%
  \textup{\uppercase\expandafter{\romannumeral#1}}%
}
\begin{document}
\title{A Clustering-Based Variable Ordering Framework for Relaxed Decision Diagrams for Maximum Weighted Independent Set Problem}
\titlerunning{A Clustering-Based Variable Ordering Framework for MWISP}
%
\author{Mohsen Nafar\inst{1}\orcidID{0000-0002-0895-2837} \and
Michael Römer\inst{2}\orcidID{0000-0001-8369-7939} \and
Lin Xie\inst{1}\orcidID{0000-0002-3168-4922}}
\authorrunning{M. Nafar et al.}
%
\institute{Brandenburg University of Technology, Cottbus, Germany 
\\
\email{\{mohsen.nafar, lin.xie\}@b-tu.de}\\
\and
Bielefeld University, Bielefeld, Germany\\
\email{michael.roemer@uni-bielefeld.de}}
\maketitle              
\begin{abstract}
Efficient exact algorithms for Discrete Optimization (DO) rely heavily on strong primal and dual bounds. Relaxed Decision Diagrams (DDs) provide a versatile mechanism for deriving such dual bounds by compactly over-approximating the solution space through node merging. However, the quality of these relaxed diagrams, i.e. the tightness of the resulting dual bounds, depends critically on the variable ordering and the merging decisions executed during compilation. While dynamic variable ordering heuristics effectively tighten bounds, they often incur computational overhead when evaluated globally across the entire variable set. To mitigate this trade-off, this work introduces a novel clustering-based framework for variable ordering. Instead of applying dynamic ordering heuristics to the full set of unfixed variables, we first partition variables into clusters. We then leverage this structural decomposition to guide the ordering process, significantly reducing the heuristic's search space. Within this framework, we investigate two distinct strategies: Cluster-to-Cluster, which processes clusters sequentially using problem-specific aggregate criteria (such as cumulative vertex weights in the Maximum Weighted Independent Set Problem (MWISP)), and Pick-and-Sort, which iteratively selects and sorts representative variables from each cluster to balance local diversity with heuristic guidance. Later on, developing some theoretical results on the growth of the size of DDs for MWISP we propose two different policies for setting the number of clusters within the proposed framework. We embed these strategies into a DD-based branch-and-bound algorithm and evaluate them on the MWISP. Across benchmark instances, the proposed methodology consistently reduces computational costs compared to standard dynamic variable ordering baseline.

\keywords{Relaxed Decision Diagram  \and Variable Ordering \and Maximum Weighted Independent Set Problem \and DD-based Branch-and-Bound.}
\end{abstract}

\section{Introduction}

Using decision diagrams (DDs), one can represent the solution space of Discrete Optimization (DO) problems in a compact way as a layered directed graph where every layer corresponds to a decision variable. Such a DD representation can be obtained from a Dynamic Programming (DP) formulation of the DO problem, making DDs a highly generic tool for DO.  For a recent survey on advances in DDs for optimization, see \cite{Castro2022decision}, and an introduction to DDs for optimization can be found in \cite{van2024introduction}. 

Although it is possible to create an exact DD that represents the complete solution space for the problem, the size of this type of DD grows exponentially. By restricting the width of the DDs, i.e. imposing a given maximum width \(\bddwidth\), it is possible to create approximate DDs, i.e. restricted and relaxed DDs, that provide primal and dual bounds which can be used in exact solution approaches such as DD-based branch-and-bound \cite{Bergman2016Discrete}. 
In restricted DDs, some feasible solutions are removed when the width of a layer exceeds \(\bddwidth\). In relaxed DDs, nodes associated with non-equivalent states are merged. 
Bergmann et al. \cite{Bergman2016DDbook} demonstrate the use of relaxed and restricted DDs within an exact Branch-and-bound Bethod and showed excellent performance achievements for the Maximum Independent Set Problem (MISP), the Maximum Cut Problem and the 2-satisfiability problem.  Rudich et al. \cite{rudich2022peel,rudich2023improved} proposed the Peel-and-Bound (PnB) method that avoids many repetitive computations during the Branch-and-bound process. 

The fact that DD-based solution approaches can be very effective and at the same time can be applied to very different DO problems as long as they are formulated as DPs led to the development of several generic high-performance solvers such as DDO \cite{gillard2021ddo} and CODD \cite{michel2024codd} which allow using DD-based solution techniques in a declarative model-and-solve fashion. Similarly, Kuroiwa und Beck \cite{kuroiwa2023domain} proposed Domain-Independent Dynamic Programming (DIDP) which uses various state-of-the-art DP and tree search algorithms for solving DO problems stated as DPs.

The effectiveness of DD-based branch-and-bound algorithms depends on the quality of the bounds of the approximate DDs, which can be constructed layer by layer using a top-down approach. In this approach, two heuristic decisions affect the quality of the primal and dual bounds: Variable ordering, which considers the order of variables in the top-down compilation, and node selection, which determines which nodes should be removed or merged.

Various variable ordering methods have been proposed, ranging from classical heuristics to machine learning (ML)-based approaches. For instance, the heuristic called ``MIN'' that has become one of the standard variable ordering heuristics for the MISP in the DD literature in which the variable that appears least frequently across the current set of states \cite{Bergman2016DDbook}. Another example is called ``CDS'', it was proposed for MISP \cite{nafar2024strengthening} and uses graph-theoretical properties of subgraphs induced on the variables included in states. 
Beyond these conventional techniques, recent research has explored the integration of ML into the variable ordering process. These ML-based methods aim to learn effective ordering policies from data, and have demonstrated promising results on MISP, as shown in works such as \cite{karahalios2022variable}, \cite{Parjadis2021Improving}, and \cite{Cappart2019Improving}. By adapting variable selection in response to learned patterns or features of the problem instance, these approaches offer a complement to traditional strategies.

\paragraph{Contribution.} In this work, we present a novel clustering-based framework for ordering variables for the top-down compilation of relaxed DDs. In this framework, we first group variables using a clustering algorithm, subsequently applying one of two distinct compilation strategies. The first, \textit{Cluster-by-Cluster (CbC)}, sorts the clusters based on problem-specific aggregate criteria (e.g., the cumulative vertex weights in the MWISP) and applies a dynamic variable ordering heuristic sequentially within each cluster. The second, \textit{Pick-and-Sort (PaS)}, iteratively selects one representative variable from each cluster based on a variable ordering heuristic, then sorts the selected set of variables before insertion into the DD. The sorting of variables in PaS is based on either (i) a problem-specific criterion (PaS), or (ii) the result of the variable ordering heuristic that is used within the framework (PaS-VO). We then propose two different policies for setting the number of clusters within our proposed framework. The proposed policies are motivated by a theoretical study on the growth of the size of DDs for MWISP in this paper. We integrate this framework into a standard DD-based branch-and-bound algorithm and evaluate it on the MWISP. Computational results demonstrate that the proposed methodology consistently outperforms standard dynamic ordering, yielding substantial reductions in overall solution time for almost all instances, i.e. graphs with densities ranging from 0.9 to 0.2.

\section{Exact and Approximate Decision Diagrams}

A DD $\DD=(\nodeSet, \arcSet)$ is a directed acyclic layered graph with a  set of nodes  $\nodeSet$ and a set of arcs $\arcSet$.  $\nodeSet$ is partitioned into $n+1$ layers $\layerSet_1, \dots, \layerSet_{n+1}$, where 
$\layerSet_1 = \{\rootnode\}$ and $\layerSet_{n+1} = \{\terminalnode\}$ for a \textit{root} $\rootnode$ and a \textit{terminal} $\terminalnode$. 
Each path from  $\rootnode$  to $\terminalnode$ in $\DD$ represents a solution to a discrete optimization problem $\optproblem$  with a maximization objective $z$ and $n$ decision variables $x_1,\cdots,x_n \in \{0, 1\}$. Each arc $a=(u,u')$ connects  nodes of two consecutive layers $\layer(u), \layer(u') = \layer(u)+1$  and is associated with a decision $\arcdomain(a)$ representing the assignment $x_{\layer(u)} = \arcdomain(a)$. This means that a path $p=(a_1,\dots,a_n)$ starting from $\rootnode$ and ending at $\terminalnode$ represents the solution $x(p) = (\arcdomain(a_1),\dots,\arcdomain(a_n))$. We denote the set of all $\rootnode$-$\terminalnode$ paths with $\pathSet$, and we refer to the solutions to $\optproblem$ represented by $\pathSet$ with $\solutionSet(\DD)$. Moreover, each arc $a$ has length $\arcvalue(a)$ and  $\sum_{i=1}^n \arcvalue(a_i)$ provides the length $\pathvalue(p)$ of path $p$.

 $\DD$ is called an exact DD if $\solutionSet(\DD) = \solutionSet(\optproblem)$ and if for each path $p \in \pathSet$ we have $\pathvalue(p) = z(x(p))$; then a longest path in $\DD$ forms an optimal solution to $\optproblem$. However, such an exact DD grows exponentially with the instance size of the DO problem under consideration, and thus, effective DD-based solution approaches rely on so-called approximate DDs that can be used to obtain upper or lower bounds for the solutions of $\optproblem$. There are two types of approximate DDs: in a \textit{restricted} DD $\DD$, which provides a lower bound to $\optproblem$, one only considers promising nodes and arcs, meaning that $\solutionSet(\DD) \subseteq \solutionSet(\optproblem)$. The second type of approximate DD, which is the one we focus on in this paper, is the \textit{relaxed} DD providing an upper bound: In a relaxed DD, we have $\solutionSet(\DD) \supseteq \solutionSet(\optproblem)$, that is, the set of paths may contain paths associated with infeasible solutions to $\optproblem$. Regarding the objective function value, every path in a relaxed DD needs to satisfy $\pathvalue(p) \geq z(x(p))$. In both types of approximate DDs, one usually limits the DD size by enforcing  a maximum width $\bddwidth$ for each layer by removing nodes (in a restricted DD) or merging nodes (in a relaxed DD).

An important way to create an exact DD relies on a Dynamic Programming (DP) formulation of $\optproblem$ that is used to compile the DD in a top-down fashion. To do so, every node $u$ is associated with a state $\state_u$ and every arc $a$ is associated with a state transition induced by the decision $d(a)$ associated with $a$. $\state_u$ is an element of the state space $\statespace$; the state  $\state_{\rootnode}$ associated with  $\rootnode$ is the so-called \textit{initial state}. The state $\state_v$ of the target node $v$ of the arc depends on the state $S_u$ of the arc's source node as well as on $d$ and is computed by the state-transition function $\transition(\state_u,d)$. The contribution to the objective function induced by a decision is computed by a reward function $\transitioncost(\state_u,d)$. Finally, the set of out-arcs of a node $u$ is determined by the set of feasible decisions $\feasibledecisions(\state_u)$ given state $\state_u$. The top-down compilation then proceeds layer by layer until it reaches the layer $\layerSet_{n}$; all arcs emanating from that layer point to the terminal node $\terminalnode$. For some important DO problems, in particular for those considered in this paper,  it turns out to be useful to \textit{dynamically} select the decision variable to be associated with the next layer based on information regarding the current layer. In a DD compiled in the sketched top-down fashion, any pair of nodes in a layer has different states, that is, partial paths ending in the same state point to the same node.

\begin{algorithm}[htp]
    \caption{Build Layer procedure}
    \label{build_layer}
    \begin{algorithmic}[1] \baselineskip=11pt \relax
    \STATE \textit{BuildLayer} ($DP$, $\nodeSet_k$, $\bddwidth$, current variable)
    \FORALL{$u \in \nodeSet_{k}$} 
        \FORALL{$d \in \feasibledecisions(\state_u$)}
            \STATE $v =$ GetOrAddNode ($\nodeSet_{k},\transition(\state_u , d)$)
            \STATE AddArc (u,v,d)
        \ENDFOR
    \ENDFOR
    \IF{\(|\nodeSet_{k+1}|> \bddwidth \)}
        \STATE RelaxLayer/RestrictLayer ($\nodeSet_{k+1}$)
    \ENDIF
    \RETURN $\nodeSet_{k+1}$
    \end{algorithmic}
\end{algorithm}

In case of approximate DDs, after having created all nodes in a given layer, the size of that layer is reduced to $\bddwidth$ by removing or merging the nodes.  Nodes are merged by redirecting the incoming arcs of the nodes to be merged to a single merged node. In order to ensure that no feasible completions of any of the merged nodes is lost, one requires a problem-specific merge operator $\oplus$ for the states associated with the two nodes, see \cite{hooker2017job} for a discussion of the conditions a valid merge operator needs to satisfy. The described process of creating a single layer of an approximate DD with a maximum width  $\bddwidth$ is formalized in Algorithm~\ref{build_layer}. Given a DP formulation $DP$ (comprising the definition of the state space $\statespace$,  the functions $\feasibledecisions$, $\transition$ and $\transitioncost$), the (previous) layer $\nodeSet_{k}$ and the current decision variable, it creates and returns the next layer $\nodeSet_{k+1}$.  Observe that the \textit{RelaxLayer/RestrictLayer} step involves a heuristic node selection procedure that determines which nodes are merged or selected.

Algorithm~\ref{top-down} displays the pseudocode for full top-down compilation procedure for DD construction. The procedure takes a DP formulation $DP$, a DD $\DD$ containing only the root node and the maximum width $\bddwidth$. Calling the algorithm with an unlimited width $\bddwidth$ will yield an exact DD and depending on the operation performed in the \textit{BuildLayer} procedure, it will result in a restricted or relaxed DD. In order to allow for a dynamic variable selection, Algorithm~\ref{top-down} introduces the set \textit{unfixed} of variables that have not been considered so far in the compilation as well as the the generic procedure \textit{NextVariable} which chooses the next variable according to a given heuristic. We will provide examples for dynamic variable ordering heuristics in the next section. Note that in case of a static variable ordering strategy, \textit{NextVariable} simply returns the next variable  according to a pre-specified order.

\begin{algorithm}[htp]
    \caption{Top-Down DD Compilation}
    \label{top-down}
    \begin{algorithmic}[1] \baselineskip=11pt \relax
    \STATE CompileTopDown ($DP$, $\DD$, $\bddwidth$)
    \STATE \(\text{unfixed} = \text{set of all decision variables}\) 
    \FOR{$k = 1$ \TO  $n$}
        \STATE \(x_k=\) NextVariable (\(\nodeSet_{k}\), unfixed)
        \STATE unfixed \(= \text{unfixed}/\{x_k\}\)
        \STATE \(\nodeSet_{k+1}\) = \textit{BuildLayer} ($DP$, $\nodeSet_k$, $\bddwidth$, \(x_k\))
    \ENDFOR
    \RETURN $\DD$
    \end{algorithmic}
\end{algorithm}

\subsection{An Example: Decision Diagrams for the MWISP}

Here, we briefly introduce the Maximum Weighted Independent Set Problem (MWISP) and its DP formulation. We then illustrate how this DP formulation can be used to construct exact DD for the MWISP. Please note that a MIPS can be considered as a version of MWISP where all the weights are 1.

\paragraph{The Maximum Weighted Independent Set Problem.} Let $G = (V, E)$ be a weighted graph, where \(V=\{v_1, v_2, \cdots, v_n\}\) is the set of vertices and \(E\) is the set of edges. Moreover, every vertex \(v_i\) is associated with a positive integer weight \(w_i\). The Maximum Weighted Independent Set Problem (MWISP) asks for the subset $I \subseteq V$ with maximum weight such that no two vertices in \(I\) are connected via an edge, i.e. \(I=\{v \in V| (u,v) \notin E, \forall u \in I\}\).   

\textbf{Example.}
Fig.~\ref{example} shows an example of a weighted graph that will serve for illustration purposes. It shows a weighted graph $G$ with five vertices and their weights inside orange frames next to them. As can be easily verified, the optimal solution contains two vertices. i.e. \(I=\{v_3, v_4\}\).

\begin{figure}[htp]
    \centering
    \includegraphics[width=3cm]{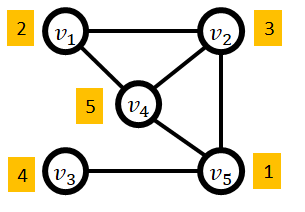}
    \caption{Example of a weighted graph $G$}
    \label{example}
\end{figure}

To allow for the creation of a DD for the MWISP, we formulate the MWISP in terms of a DP. To begin with, a state $S_u$ associated with a node $u$ in the DD corresponds to a set of vertices in \(V\) that are still available to be part of an independent set. The initial state $S_r$ associated with the root node $r$ thus corresponds to $V$, the terminal state corresponds to the empty set. Each layer $j$ in the DD is associated with the decision variable $x_j$ which consists in adding the $j$-th vertex $v_j$ (according to the chosen variable order) in the original graph $G$ to the independent set or not. Given a state $S_u$ and a decision $d(a)$ ($d=1$ means adding the vertex to the solution, $d=0$ not adding it) associated with arc $a=(u,u')$ emanating from node $u$, the state transition function $\transition_{j}(\state_{u},d(a))$ determines the state of node $u'$ in the next layer $j+1$ of the DD. Specifically, \( \transition_{j}(\state_{u},0) = \state_{u} \setminus \{v_j\} \), and \( \transition_j(\state_{u},1) = \state_{u} \setminus \{\Gamma(v_j)\}\) where \(\Gamma(v_j)\) is the set of vertices adjacent to \(v_j\) in \(V\). Note that if \(v_j \notin \state_{u} \), the decision $d=1$ is not feasible, and thus the DD will not contain an arc $a$ emanating from $u$ with $d(a)=1$. The reward function \(\transitioncost_{j}(\state_u,d)\) is \(\transitioncost_{j}(\state_u,0) = 0\) and \(\transitioncost_{j}(\state_u,1)=w_j\).

\textbf{Example (continued).}  Fig.~\ref{Exact_relax} (left side) shows an exact DD for the MWISP instance from Fig.~\ref{example} with the variables displayed in the picture. Every node of the DD is associated with the set of the remaining available vertices for consideration for the independent set and a small orange label next to it that shows the corresponding objective value. Each dashed and solid arc shows the assignment of values 0 and 1 to the corresponding decision variable, respectively. The exact DD in this example has a width of 3, and the longest r-t path is \([x_1=0, x_2=0, x_3=0, x_4=1, x_5=1]\), gives the optimum solution $\{v_3, v_4\}$ with value \(9\).

\begin{figure}[htp]
    \centering
    \includegraphics[width=12 cm]{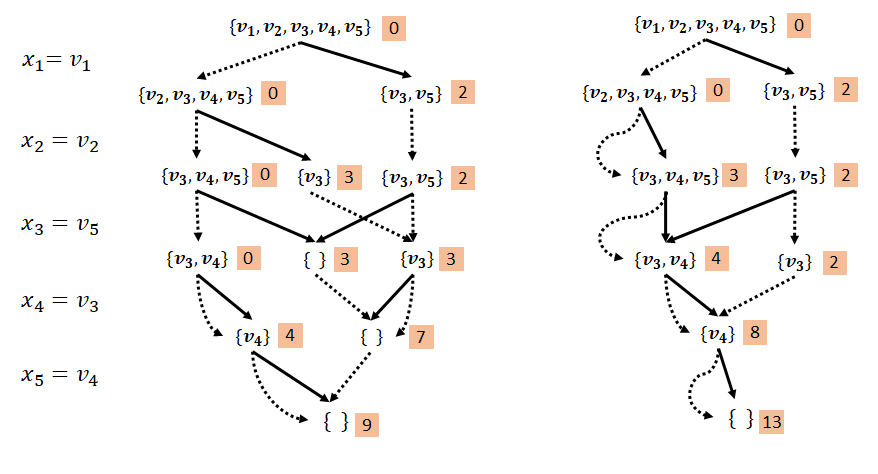}
    \caption{Exact (left side) and relaxed (right side) DDs for the MWISP of the example \ref{example}. Small orange labels next to each node are their partial objective value. Each dashed and solid arc shows the assignment of value 0 and 1 to the corresponding decision variable, respectively. The relaxed DD has \(\bddwidth=2\).}
    \label{Exact_relax}
\end{figure}

\paragraph{Relaxed Decision Diagrams for the MWISP}

A relaxed DD provides a discrete relaxation of the problem under consideration by over-approximating the solution space of the problem. As mentioned before, to guarantee a valid relaxation, the merge process must ensure that no feasible solution is lost. Merging two nodes $u$ and $u'$ into a node $M$ involves two steps: First, all incoming arcs to nodes \(u\) and \(u'\) must be redirected to merged node \(M\). Second, the state $S_M$ of the new node must be determined in a way that the solution space of the tail problem starting from \(M\) contains the tail problem solutions of both nodes $u$ and $u'$. As explained above, the state $S_M = S_u \oplus S_{u'}$, where $\oplus$ is a so-called merge operator. As discussed e.g. in \cite{Bergman2016DDbook}, a valid merge operator for MWISP and MISP is $\cup$, that is, $S_M$ is given by the union of the vertex sets (subsets of $V$ in the problem graph $G$) forming the states $S_u$ and $S_{u'}$. Fig.~\ref{Exact_relax} shows the relaxed DD for graph $G$ where \(\bddwidth=2\).

\subsection{The Dynamic Variable Ordering Heuristics}
The order of the decision variables according to which the DD is compiled heavily affects both the size of an exact DD, and the quality of the bounds from approximate DDs with a fixed maximum width. Different heuristics can be employed for variable ordering, e.g. a static variable ordering, that is, an ordering that is specified before the compilation of the DD and is independent of the configuration of the layers during the compilation process. However, it turns out that the best variable ordering heuristics for problems such as MWISP are dynamic, that is, they use information of the partially compiled DD to choose the next decision variable (in case of MWISP, the next vertex).

In the following, we briefly describe the dynamic variable ordering heuristic most commonly used for the MWISP in the literature, the so-called MIN (\textit{Minimum Number of States}) variable ordering (see e.g. \cite{Bergman2016DDbook}).

In the MIN variable ordering, vertices are assigned a value that corresponds to the number of times they appear in the states of the nodes in the current layer \(\nodeSet_k\). Then, the vertex exhibiting the minimum number of appearances is selected as the next vertex (variable) to be considered in the top-down compilation of the DD. The worst-case time complexity of performing this selection is \(O(\bddwidth\cdot |V|)\) per layer. We will use this heuristic applying our framework to the  MWISP.

\textbf{Example (continued).} The result from using the MIN variable ordering for the top-down compilation of a relaxed DD with \(\bddwidth=2\) for the example graph $G$ from Fig.~\ref{example} is shown in Fig.~\ref{Exact_relax} (right side, with upper (dual) bound 13).

In the next section, we propose a novel clustering-based variable ordering framework that applies this dynamic ordering heuristic (MIN) to subsets of variables obtained by clustering. 

\section{A novel Framework for Variable Ordering}

The main motivation for our clustering framework is that while in general, one strives for the best-possible variable ordering in order to achieve strong dual bounds, as exemplified above, some of the most successful variable ordering heuristics come with a non-negligible time complexity. When employed in an exact algorithm such as DD-based branch-and-bound, however, there is always a trade-off between the quality of the bounds obtained from a relaxed DD and the time required to compile it.

DD-based branch-and-bound uses relaxed DDs as its base search tree and also as an algorithm to obtain dual bounds on the subproblems. Therefore, it might be beneficial to speed up the compilation of the relaxed DDs without heavily impairing the quality of their bounds. As mentioned before, the complexity of MIN is \(O(\bddwidth\cdot |V|)\) per layer. Suppose that instead of applying MIN to all the variables, we apply it only to a subset of the variables in every layer. This indeed will decrease the complexity and hence the run-time. Therefore, if one can control the decrease of the quality of the dual bounds to be in moderate relation with the gains in run-time, it could be the case that overall solution time of the DD-based branch-and-bound will decrease. Motivated by this hypothesis, we came up with the idea of grouping the set of variables into subsets of vertices to be fed to MIN variable ordering heuristic. To perform such a grouping, we decided to employ clustering algorithms, e.g. k-means clustering. Assume that the variables are clustered into \(n_c\) clusters and let \(\frac{|V|}{n_c}\) be the size of the clusters (approximately); then the complexity changes to \(O(\bddwidth\cdot \frac{|V|}{n_c})\) per layer.

In this framework, the variables are first clustered into groups according to problem-specific features. The construction process then proceeds in two ways, which are described in the following subsections. Afterwards, we discuss how the number of clusters can be determined.

\subsection{Cluster-by-Cluster (CbC)}

In this strategy, which is called \textit{Cluster-by-Cluster (CbC)} , the clusters are sorted based on some criteria before executing the top-down compilation. In other words, this sorting step is operated between the clustering step and the rest of the construction process.
Algorithm~\ref{top-down_gbg} shows the pseudocode of the top-down compilation in which the proposed technique is adapted to compile a relaxed decision diagram. First, the variables are clustered into \(n_c\) clusters, i.e. line 2. Then, in line 3, the clusters are sorted in non-decreasing order according to some heuristics such as their sum of the vertices' weights for MWISP. The for loop in line 5 goes into the clusters in the sorted order, and the while loop starting at line 6 feeds variables of the cluster to a variable ordering heuristic (line 7). Then, the corresponding DD layer is built and checked if the width of the layer exceeds the given maximum width \(\bddwidth\) and if it does, a relaxation is performed on the layer in which nodes will be merged until the \(\bddwidth\) is respected (procedure \textit{BuildLayer} at line 9).

\begin{algorithm}[htp]
    \caption{Top-Down DD Compilation using clustering-based Variable Ordering Framework; Cluster-by-Cluster}
    \label{top-down_gbg}
    \begin{algorithmic}[1] \baselineskip=11pt \relax
    \STATE CompileTopDown ($DP$, $\DD$, $\bddwidth$, $n_C$)
    \STATE clusters = cluster variables (vertices) into \(n_c\) clusters
    \STATE \(SC\) = vector of clusters sorted by the sum of the vertex weights in each cluster
    \STATE counter = \(2\)
    \FOR{$k = 1$ \TO  $n_c$}
        \WHILE{\(|SC[k]| \ge 1\)}
            \STATE variable \(=\) NextVariable (\(\nodeSet_{k}\), \(SC[k]\))
            \STATE \(SC[k] = SC[k] \setminus \{\text{variable}\}\)
            \STATE \(\nodeSet_{counter}\) = \textit{BuildLayer} ($DP$, $\nodeSet_{counter-1}$, $\bddwidth$, variable)
            \STATE counter += \(1\)
        \ENDWHILE
    \ENDFOR
    \RETURN $\DD$
    \end{algorithmic}
\end{algorithm} 

\subsection{Pick-and-Sort (PaS)}

Another strategy that can be employed in our clustering-based variable ordering framework is to not order the clusters but rather alternate between them to pick one variable from every cluster and then sort the selected variables and build their corresponding layers in the sorted order. This procedure is repeated until all variables of the problem are considered. We call this strategy \textit{Pick-and-Sort (PaS)}. This strategy differs from the previous one, i.e. Cluster-by-Cluster, in the sense that it does not exhaust the clusters one by one; in every step one variable from every cluster is being taken into account. Intuitively, this approach can benefit from multiple ways for sorting the selected variables; (i) a problem specific criterion (PaS), e.g. weight of the vertices in case of MWISP, (ii) the (problem-agnostic) information obtained from the variable ordering heuristic that is being used within the framework (PaS-VO). The motivation behind this strategy is to improve the previous strategy, i.e. Cluster-by-Cluster, in a sense that it tries to lower the loss of quality of the relaxed DD bounds by decreasing the restriction on the order of variables and giving it more freedom that could result in a more similar ordering to the classical variable ordering. 
The pseudocode of the corresponding top-down compilation is presented in Algorithm~\ref{top-down_ps}. After grouping the variables into \(n_c\) clusters (line 2), the algorithm enters the main loop in which it is controlled that all \(n+1\) layers of the DD corresponding to \(n\) variables be compiled (starting from lines 4). Then, a variable from each cluster is selected and is pushed (appended) into a vector called \textit{variables} (lines 5 - 9). At line 10, the algorithms sorts the vector \textit{variables} according to the two different ways we described earlier. In the first way, the selected variables are sorted according to a problem specific criterion. In the second way, the selected variables are sorted based on the values that were assigned to each variable in the \textit{NextVariable} heuristic that was done in line 7. In the next step, a for loop is used to loop over the sorted vector of variables (line 11) and build the corresponding layers and in the case of exceeding the maximum width \(\bddwidth\) of a layer, a relaxation of the layer takes place to ensure that \(\bddwidth\) is respected, that all is done via the procedure \textit{BuildLayer} at lines 12.

\begin{algorithm}[htp]
    \caption{Top-Down DD Compilation using clustering-based Variable Ordering Framework; Pick-and-Sort}
    \label{top-down_ps}
    \begin{algorithmic}[1] \baselineskip=11pt \relax
    \STATE CompileTopDown ($DP$, $\DD$, $\bddwidth$, $n_C$)
    \STATE clusters = cluster variables (vertices) into \(n_c\) clusters
    \STATE counter = \(1\)
    \WHILE{counter $\le n $}
        \STATE variables \(= [~]\)
        \FOR{cluster in clusters}
            \STATE variable \(=\) NextVariable (\(\nodeSet_{k}\), cluster )
            \STATE cluster \( = \text{cluster} \setminus \{\text{variable}\}\)
            \STATE Append\((\text{variables, variable})\)
        \ENDFOR
        \STATE sort variables by either (i) a problem-specific criterion, or (ii) their assigned value according to NextVariable (\(\nodeSet_{k}\), cluster ) heuristic
        \FOR{variable \(\in\) variables}
            \STATE \(\nodeSet_{counter}\) = \textit{BuildLayer} ($DP$, $\nodeSet_{counter-1}$, $\bddwidth$, variable)
            \STATE counter +=\(1\)
        \ENDFOR
    \ENDWHILE
    \RETURN $\DD$
    \end{algorithmic}
\end{algorithm} 
\subsection{Determining the Number of Clusters}

In this section, we present two policies for determining the number of clusters, which is a key parameter of our framework. These policies are based on a Lemma on the growth of the size the DD layers for an MWISP instance. We start with the following definition that we will use for developing the mentioned Lemma. 

\begin{definition}[Reduced decision diagram]
    A decision diagram in which no two states are equivalent, meaning that they exhibit the same set of completions, is called a \textit{reduced decision diagram}.  
\end{definition}

In other words, the set of solutions of the subproblems corresponding to equivalent states are the same. Deciding whether two states are equivalent when compiling a DD in a top-down fashion is an NP-hard problem for some problems, e.g. 0/1 Knapsack. However, there exist problems for which the equivalence of a given pair of states can be checked in polynomial time, e.g. MWISP. Let decision diagram \(\DD\) be a decision diagram compiled for an instance of a MWISP, then two states \(S\) and \(S'\) are equivalent if and only if \(S=S'\). This can be checked with a simple state comparison in the top-down compilation of \(\DD\) for MWISP. Moreover, any pair of equivalent states in a DD can be merged together without causing any over-approximation (under-approximation) of the solution.

\begin{lemma}[Equivalence in MWISP]\label{eq_misp}
    Let \(\DD\) be a decision diagram for MWISP (MISP) and let \(S\) and \(S'\) be two states in a layer of \(\DD\). States \(S\) and \(S'\) are equivalent if and only if \(S=S'\). 
\end{lemma}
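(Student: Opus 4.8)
The plan is to prove the biconditional in the two natural directions, with the bulk of the work lying in the ``only if'' direction, since the ``if'' direction is essentially immediate from the definition of equivalence.

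\textbf{The easy direction ($S = S' \Rightarrow$ equivalent).} First I would recall that, in the DP formulation for MWISP, a state $S$ is simply a subset of $V$ — the vertices still available to be added to the independent set — and the set of completions of a node with state $S$ is determined entirely by $S$ (the tail problem is exactly an MWISP-style DP restricted to the induced subgraph on $S$, via the transition functions $\transition_j(\cdot,0)$ and $\transition_j(\cdot,1)$ and rewards $\transitioncost_j$). Hence if $S = S'$, the tail problems literally coincide, so the two states have identical sets of completions and are equivalent by definition. One subtlety worth a sentence: the completions also depend on the ordering of the remaining (unfixed) variables, but since both nodes lie in the same layer of the same DD $\DD$, they share the same set of unfixed variables processed in the same order, so this causes no issue.

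\textbf{The hard direction (equivalent $\Rightarrow S = S'$).} I would argue by contraposition: assume $S \neq S'$ and exhibit a completion that belongs to one state but not the other, showing the completion-sets differ. Without loss of generality there is a vertex $v \in S \setminus S'$. The natural candidate completion is the one that sets $x$ corresponding to $v$ to $1$ (adds $v$) and sets every other remaining variable to $0$. Starting from $S$ this is feasible — $v \in S$, so the decision $d=1$ is available — and yields a path with value at least $w_v > 0$ (weights are positive integers). Starting from $S'$, however, $v \notin S'$, so by the DP rules $\DD$ contains no arc with $d=1$ out of that node for variable $v$; thus no completion of $S'$ sets $v$ to $1$. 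I need to be a little careful about how ``the same completion'' is identified across the two nodes: a completion is a vector of decisions over the shared set of remaining variables, and I claim the decision vector ``$v \mapsto 1$, all others $\mapsto 0$'' is a valid completion from $S$ but not from $S'$. That separates the completion sets, so the states are not equivalent. The main obstacle — and the step I would be most careful about — is making precise the notion of ``completion'' so that it is comparable between the two nodes (same index set of remaining variables, same downstream variable order) and verifying that the chosen decision vector really is realizable as an $\rootnode$–$\terminalnode$ continuation from $S$ under the MWISP transition rules (adding an isolated-within-the-tail vertex and then declining all others never violates feasibility, since declining a vertex is always feasible and only shrinks the state).

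\textbf{Wrap-up.} Combining the two directions gives the claimed equivalence. I would close by remarking that the argument uses only two structural facts about the MWISP DP — that the state fully determines the tail problem, and that a vertex present in the state admits the ``select it'' decision while a vertex absent from the state does not — which is exactly why the same statement fails for problems like 0/1 Knapsack, where distinct states (residual capacities) can share identical completion sets. This also justifies the remark preceding the lemma that equivalence for MWISP reduces to a simple state-equality check and that merging equal states introduces neither over- nor under-approximation.
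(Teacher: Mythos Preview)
Your proof is correct and follows essentially the same approach as the paper. For the ``only if'' direction the paper argues directly---equivalent completions force the induced subgraphs, hence the vertex sets, to coincide---whereas you take the contrapositive and exhibit an explicit distinguishing completion; the underlying observation (a vertex present in one state but absent from the other separates the completion sets) is the same, and your version is the more carefully justified of the two.
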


\begin{proof}
    Suppose that \(S=S'\). It means that they both contain the same set of vertices of the original graph, therefore, they correspond to the same induced subgraph of the original graph. Hence, the set of solutions, i.e. set of independent sets, for both of them are the same. Therefore, states \(S\) and \(S'\) are equivalent.

    Now assume that the states \(S\) and \(S'\) are equivalent. This means that the subgraphs induced by their vertices are the same. This only happens if the set of vertices for both states are the same. Hence, \(S=S'\). \(\hfill \square\)
    
\end{proof}

Using Lemma \ref{eq_misp}, it is easy to see that the size of the layers of a DD for MWISP in bottom-up view grows exponentially in the power of 2 with the terminal layer containing only one state which is an empty set. For simplicity, one can assume that bottom-up traversing the DD layers' states resembles constructing all possible subsets of the set of vertices at each layer starting from the terminal layer (terminal state) and add a new vertex into consideration in every layer. 

\begin{lemma}[Bottom-up growth of MWISP's DD layers]\label{bu_growth}
    Let \(\DD\) be a decision diagram built for an instance of MWISP where there exists no equivalent states. The size of \(\DD\) in bottom-up view in the worst case grows exponentially in power of 2, i.e. \(|\nodeSet_{n+1-z}| \le 2^z\), where \(z \in \{1, 2, \cdots, n\}\).
\end{lemma}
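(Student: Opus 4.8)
The plan is to prove the bound $|\nodeSet_{n+1-z}| \le 2^z$ by a straightforward induction on $z$, reading the diagram from the terminal layer upward, and using Lemma~\ref{eq_misp} to control how states can proliferate.

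First I would set up the base case. For $z = 1$ we look at layer $\nodeSet_{n}$, i.e. the layer immediately above the terminal. Every node in this layer has exactly one remaining decision variable to assign, so its state is a subset of a single-vertex set $\{v_{i_n}\}$ (the vertex associated with the last layer), hence the state is either $\emptyset$ or $\{v_{i_n}\}$; by Lemma~\ref{eq_misp} distinct nodes carry distinct states, so $|\nodeSet_{n}| \le 2 = 2^1$. (If one prefers to include the terminal layer, note $|\nodeSet_{n+1}| = 1 \le 2^0$, which seeds the induction at $z=0$ just as cleanly.)

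For the inductive step, assume $|\nodeSet_{n+1-(z-1)}| \le 2^{z-1}$ and consider the layer $\nodeSet_{n+1-z}$ one level higher. The key observation is that the layer $\nodeSet_{n+1-z}$ is built from $\nodeSet_{n-z+1} = \nodeSet_{n+1-(z-1)}$ by the top-down \textit{BuildLayer} step: each node $u$ in layer $\nodeSet_{n+1-z}$ arises by choosing a decision $d \in \{0,1\}$ for the variable $x_{k}$ associated with that layer and applying the transition $\transition(\state_u,d)$; conversely, every node in $\nodeSet_{n+1-z}$ is the source of at most two arcs (one for $d=0$, one for $d=1$), each landing in $\nodeSet_{n+1-(z-1)}$. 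But I actually want the bound in the other direction, so the cleaner argument is the bottom-up "subset-construction" picture sketched in the paragraph preceding the lemma: the states appearing in layer $\nodeSet_{n+1-z}$ are all subsets of the $z$ vertices $\{v_{i_{n-z+1}},\dots,v_{i_n}\}$ that are associated with the last $z$ layers (since all earlier variables have already been fixed along any path to that node, only these $z$ vertices can still be "available"). Since there are exactly $2^z$ subsets of a $z$-element set, and since by Lemma~\ref{eq_misp} no two nodes in a layer of a DD with no equivalent states share a state, we get $|\nodeSet_{n+1-z}| \le 2^z$, completing the induction.

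The main thing to get right — and where I would be careful — is the claim that a state in layer $\nodeSet_{n+1-z}$ can only contain vertices from the last $z$ layers. This needs the structure of the MWISP transition functions: $\transition_j(\state_u,0) = \state_u \setminus \{v_j\}$ removes $v_j$, and $\transition_j(\state_u,1) = \state_u \setminus \Gamma(v_j)$ does not increase the set; in either case, once the variable $x_j$ has been processed, $v_j$ is gone from every downstream state. So by the time we have processed the first $n-z$ variables, the surviving state is contained in the remaining $z$ vertices. In a dynamic variable ordering this set of "last $z$ vertices" can differ between branches, but along any fixed root-to-node path the first $n-z$ positions are fixed, and the point is only that $|{\rm state}| \le z$ relative to whatever $z$-element residual ground set applies; the counting $2^z$ bound is uniform. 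Stating this residual-ground-set argument cleanly is the only subtlety; the rest is bookkeeping and a direct appeal to Lemma~\ref{eq_misp}.
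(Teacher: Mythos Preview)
Your argument is correct and is essentially the paper's own proof: both observe that any state in layer $\nodeSet_{n+1-z}$ must be a subset of the $z$ yet-unprocessed vertices, whence by Lemma~\ref{eq_misp} there are at most $2^z$ distinct nodes. The induction scaffolding you set up is harmless but unneeded (the subset bound applies directly for every $z$), and your branch-dependent-ordering worry is moot here since \textit{NextVariable} is invoked once per layer in Algorithm~\ref{top-down}, so all nodes in a given layer share the same residual variable set.
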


\begin{proof}
    We just need to show that the size of every layer \(\nodeSet_{n+1-z}\) in decision diagram \(\DD\) of an instance of a MWISP, where \(z \in \{1,2,  \cdots, n\}\), is bounded by \(2^z\), i.e. it contains at most \(2^z\) states corresponding to all subsets of a set of vertices with \(z\) members, i.e. \(\{x_n, x_{n-1}. \cdots, x_{n+1-z}\}\). To do this, assume that we are given the ordered list of variables and we want to construct the corresponding DD layers in a bottom-up fashion starting from the last layer, \(\nodeSet_{n+1}\), which contains the terminal state \(\{\}\). At every layer \(\nodeSet_{n+1-z}\) there are still \(z\) variables in the ordered list of variables in top-down compilation of \(\DD\) that has not been considered yet. Recall that in \(\DD\) every state represents the set of available vertices for consideration for independent set. Therefore, if we were to guess the corresponding states for this layer, in the worst case, we would have to construct all possible subsets that can be formed using these \(z\) variables in order to be able to include all possible subsets of available vertices for consideration in top-down fashion. This means that this layer can contain at most \(2^z\) states, i.e. let \(z=1\) then layer \(\nodeSet_{n}\) contains only two states \(\{\}\) and \(\{x_{n}\}\), let \(z=2\), i.e. \(\{x_{n-1},x_{n}\}\), then there are at most \(4\) possible subsets representing the sets of available vertices for consideration, i.e. \(\{\}, \{x_n\}, \{x_{n-1}\}, \{x_n, x_{n-1}\}\). Now assume that the number of states at layer \(\nodeSet_{n+1-z}\) is more than \(2^z\), this implies that at layer \(\nodeSet_{n+1-z}\), there exist more than \(z\) vertices that are not yet considered which contradicts the steps of a top-down compilation of \(\DD\). \(\hfill \square\)   
    
\end{proof}

Knowing that a DD \(\DD\)  represents a MWISP (MISP) instance, one can approximate the size of \(\DD\) in the worst case. Assuming that the value of the given maximum width is fixed during a DD-based branch-and-bound, we can compute a value based on the given maximum width for the size of subproblems within the DD-based branch-and-bound for which the resulting DD will be exact no matter how their corresponding variables are ordered, meaning that the width of their corresponding DDs will not exceed the given maximum width. 
This is particularly useful in a DD-based branch-and-bound of the MWISP, since we can simply avoid using any sophisticated variable ordering heuristic for such subproblems. Therefore, we present the following lemma on approximating the size of a DD for an instance of MWISP.

\begin{lemma}[Size of MWISP's DD]\label{misp_dd_size}
    Let \(\DD\) be a decision diagram compiled for an instance of MWISP (MISP) with \(n\) vertices. Then the width of the largest layer of \(\DD\) will not exceed \(2^{\lfloor n/2 \rfloor}\).
\end{lemma}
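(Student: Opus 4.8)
The plan is to bound the width of every layer $\nodeSet_j$ from two sides and then combine the bounds. Lemma~\ref{bu_growth} already supplies a bottom-up bound: substituting $j = n+1-z$ into $|\nodeSet_{n+1-z}| \le 2^{z}$ gives $|\nodeSet_j| \le 2^{\,n+1-j}$ for every layer $j$. I would next establish the symmetric top-down bound $|\nodeSet_j| \le 2^{\,j-1}$. The argument mirrors the proof of Lemma~\ref{bu_growth}: layer $\nodeSet_j$ is reached only after the decision variables $x_1,\dots,x_{j-1}$ have been fixed, so every node of $\nodeSet_j$ is the endpoint of at least one partial assignment to these $j-1$ binary variables; since there are at most $2^{\,j-1}$ such assignments and, the diagram being reduced, distinct nodes carry distinct states, we get $|\nodeSet_j| \le 2^{\,j-1}$.

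Combining the two inequalities yields $|\nodeSet_j| \le \min\{\,2^{\,j-1},\, 2^{\,n+1-j}\,\}$ for all $j \in \{1,\dots,n+1\}$, so the width of the largest layer is at most $\max_{1\le j\le n+1}\min\{\,2^{\,j-1},2^{\,n+1-j}\,\}$. Since $t\mapsto 2^{t}$ is increasing, this equals $2^{M}$ with $M=\max_{1\le j\le n+1}\min\{\,j-1,\ n+1-j\,\}$. I would then conclude with a short case analysis on the parity of $n$: the linear functions $j-1$ and $n+1-j$ cross at $j=(n+2)/2$, giving $M=\lfloor n/2\rfloor$ — attained at $j=n/2+1$ when $n$ is even (both exponents equal $n/2$) and at $j=(n+1)/2$ or $j=(n+3)/2$ when $n$ is odd (both exponents equal $(n-1)/2$). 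Hence the largest layer has width at most $2^{\lfloor n/2\rfloor}$.

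The argument is essentially routine once the top-down companion of Lemma~\ref{bu_growth} is in place; the only point that needs a little care is the indexing (layer $j$ corresponds to $j-1$ fixed variables and $n+1-j$ remaining variables) and the parity bookkeeping in the final maximization, so I do not anticipate a genuine obstacle here.
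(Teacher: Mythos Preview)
Your proposal is correct and follows essentially the same approach as the paper: both arguments combine the top-down bound $|\nodeSet_j|\le 2^{\,j-1}$ with the bottom-up bound from Lemma~\ref{bu_growth}, take the minimum, and then perform a parity case analysis to locate the maximizing layer. Your write-up is in fact a bit tidier in its indexing and in justifying the top-down bound, but the idea is the same.
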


\begin{proof}
    We assume a DD \(\DD\) for MWISP in which every pair of equivalent states is already merged, that is, \(\DD\) does not contain equivalent states. Regardless of containing equivalent states, if we consider the width of layers from the layer containing only the root (initial) state (initial layer), in the worst case the width of the layers grows exponentially by powers of 2, i.e. \(2^0, 2^1, 2^2, \cdots, 2^{k-1}, \cdots, 2^n\), where \(k\)th layer contains \(2^{k-1}\) states. This shows the growth of the width of the layers in top-down view. Now consider the sequence showing the growth in bottom-up view, that is \(2^{0}, 2^{1}, 2^{2},  \cdots, 2^{z}, \cdots, 2^n\), where \(z\)th layer contains \(2^{z-1}\) states, and \(z = (n + 1 - k) \in \{n+1, n, \cdots, 1\}\) since \(k \in \{1, 2, \cdots, n+1\}\) (Fig. \ref{growth}).  
    
    Let \(|\nodeSet_{k+1}|\) be the width of the layer \(k+1\), considering the two sequences that correspond to the growth of the width of the layers from top-down and from bottom-up view in \(\DD\), the following holds for any \(k \in \{1, 2, \cdots,n-1\}\) (\(|\nodeSet_{1}|=|\nodeSet_{n+1}|=1\)):

    \begin{equation}
        \begin{cases}
            |\nodeSet_{k+1}| \le 2^{k}, \text{ from the sequence of top-down view}
            \textbf{ }\\
            |\nodeSet_{k+1}| \le 2^{n-k}, \text{ from the sequence of bottom-up view} 
        \end{cases}\label{width}
    \end{equation}

    This means that at some point the width of the layers stops increasing and in fact starts decreasing; it is an implication of the inequalities \ref{width}. Now it just remains to find the \(k\) for which \(|\nodeSet_{k+1}|\) attains its maximum. And this happens when \(2^{k} = 2^{n-k}\). Now consider two cases; (i) \(n\) is an even number, and (ii) \(n\) is an odd number. In the first case, the layer with index \(k+1=(n/2)+1\) has the highest width and it is \(|\nodeSet_{k+1}|=2^{n/2}\). In the second case, there are two layers with highest width and their indices are \(k+1=\lfloor n/2\rfloor+1\) and \(k+2=\lfloor n/2\rfloor+2\) and they both have a width of \(|\nodeSet_{k+1}|=|\nodeSet_{k+2}|=2^{\lfloor n/2 \rfloor}\). \(\hfill \square\)     
\end{proof}

\begin{figure}[htp]
    \centering
    \includegraphics[width=12cm]{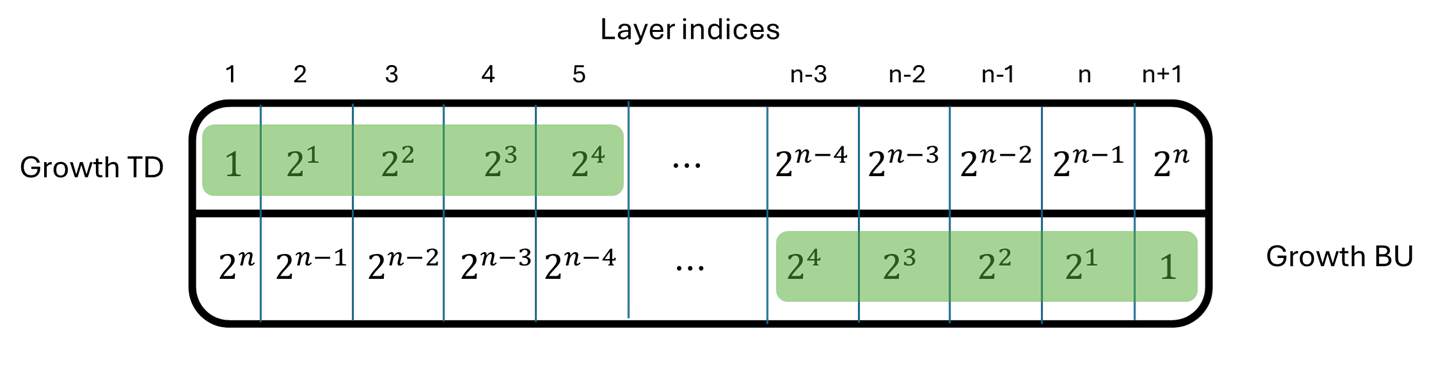}
    \caption{Growth of the widths of the layers of a DD for a MISP instance with \(n\) vertices; first row corresponds to the sequence related to top-down (TD) view starting from initial layer, second row corresponds to the sequence related to bottom-up (BU) view starting from terminal layer. The widths highlighted with green box are the highest possible width for the corresponding layers. E.g. \(|\nodeSet_{5}| \le 2^4 \& |\nodeSet_{5}| \le 2^{n-4} \Rightarrow |\nodeSet_5| \le 2^4 \).}
    \label{growth}
\end{figure}

Lemma \ref{misp_dd_size} implies that the width of no layer in a DD \(\DD\) for a MWISP (MISP) with the number of variables \(n \le 2\cdot \lceil \log_2 \bddwidth \rceil\), where \(\bddwidth\) is the given maximum width, will exceed the given maximum width. Therefore, such a DD \(\DD\) is always exact. This is particularly interesting and useful to know when dealing with a DD-based branch-and-bound for solving a MWISP (MISP). One of its implications is that if the maximum width of restricted and relaxed DDs within the branch and bound are fixed to \(\bddwidth\), then any subproblems within the branch and bound that contains less than \(2\cdot \lceil \log_2 \bddwidth \rceil\) vertices would be solved to optimality in the first try as the created restricted and/or relaxed DD for it actually turns out to be exact even in the worst case. 

Therefore, for such subproblems, no clustering of variables is used. Another use of this property is that the number of clusters \(n_c\) cannot be fixed to a value greater than the size of the largest of such subproblems, i.e. \(n_c < 2\cdot \lceil \log_2 \bddwidth \rceil\).
Intuitively, the framework in this paper would benefit more from having a dynamically adapted number of clusters rather than having a fixed number of clusters for the entire DD-based branch-and-bound. However, we can consider both policies, each with a fixed condition on the number of clusters. Therefore, we came up with the following policies on the choice of the number of clusters:

\begin{itemize}
    \item \textbf{\textit{fixed}}; in this case, we fix the number of clusters \(n_c\) on 2, i.e. \(n_c = 2\). One reason for this choice is that for subproblems with size \(n'\) where \(2\cdot \lceil \log_2 \bddwidth \rceil < n' < 4\cdot \lceil \log_2 \bddwidth \rceil\) it does not make sense to cluster the variables into more than 2 clusters. Moreover, 2 clusters still would make sense for larger subproblems as it would still decrease the running time of top-down compilation of relaxed DDs and the decrease in quality of relaxed DDs would be moderate. If one chooses a number greater than 2, as soon as the size of subproblems becomes smaller than \(4\cdot \lceil \log_2 \bddwidth \rceil\), \(n_c\) should be changed to 2, therefore, it becomes an adaptive policy which we are considering as the second policy.
    \item \textbf{\textit{adaptive}}; in this case, we would use the following formula: \(n_c= Max(\frac{R}{2},1)\) where \( R = \Big\lfloor \frac{n'}{2\cdot \lceil \log_2 \bddwidth \rceil} \Big\rfloor \) and \(n'\) is the size of the subproblem being solved in the DD-based branch-and-bound. However, one could also think of other adaptive formulas, e.g. by designing it in a way that the approximate size of each cluster in every step does not exceed a given specific percentage of the subproblem size. 
\end{itemize}

\section{Computational Results}

In this section, we present the results of computational experiments with different policies for the choice of the number of clusters; \textit{adaptive} and \textit{fixed}, when creating relaxed DDs within a classical DD-based branch-and-bound algorithm \cite{bergman2016branch} to solve the instances of MWISP to optimality. 
A simple k-means clustering algorithm has been used for clustering the vertices in which each vertex \(v_i\) is associated with a feature vector for MWISP, e.g. \([dg(v_i), w(v_i)]\), where \(dg(v_i)\) and \( w(v_i)\) are the degree and the weight of the vertex \(v_i\), respectively. Moreover, to perform the experiments we used SortObj (SO) node selection heuristic; a generic and most commonly used node selection heuristic in the literature for creating restricted and relaxed DDs (in a relaxed DD nodes are sorted according to their objective values, then the best \(\bddwidth-1\) nodes are retained, and then the rest of the nodes are merged into a single merged node). The maximum width of restricted and relaxed DDs are set to \(\bddwidth=100\) for the entire DD-based branch-and-bound. Moreover, the variables/vertices in restricted DDs are sorted in a non-increasing order based on their weights. 

For performing the experiments, we took the sets of instances (180 instances in total) from \cite{nafar2024strengthening}, each of which contains 20 graphs with 100 vertices. The instance sets differ with respect to the graph density which ranges from 0.9 to 0.1 (instances along with the code and the detailed results are available here \url{https://github.com/mnafar/Clustering-Based-Variable-Ordering-Framework-for-Relaxed-DDs} ). For the weights of the vertices in the graphs, we just assigned the value \((i \textit{ mod } 100)+1\) to every vertex \(v_i\). We implemented all approaches in Pluto Notebook (Julia programming language) and ran the experiments on a Windows machine with 32GB RAM and an Intel(R) Core(TM) Ultra 7 164U processor with 2.10 GHz. 

We report the performance of our proposed approaches, i.e. CbC, PaS, and PaS-VO in Table \ref{sol_time}, all the values are based on seconds. We evaluated the performances of our approaches in terms of the solution times in comparison to the baseline, i.e. using MIN variable ordering with no clustering.

\setlength{\tabcolsep}{9pt}
\begin{table}[htp]
    \centering   
    \scalebox{.9}{
  \begin{tabular}{@{}c|c|cc|cc|cc@{}} 
 Densities & \multicolumn{7}{c}{ Approaches }  \\ \hline   

  & \multicolumn{1}{c}{ Baseline } & \multicolumn{2}{c}{ CbC } & \multicolumn{2}{c}{ PaS-VO } & \multicolumn{2}{c}{ PaS } \\ \hline

  &  & \multicolumn{1}{c}{ fixed }  &  \multicolumn{1}{c}{ adaptive } & \multicolumn{1}{c}{ fixed }  &  \multicolumn{1}{c}{ adaptive }& \multicolumn{1}{c}{ fixed }  &  \multicolumn{1}{c}{ adaptive }  \\   \hline

0.1 & 196.21 & 573.58 & 535.29 & 237.61 & 364.32 & 203.80 & 278.40  \\ 
0.2 & 49.16 & 58.37 & 59.01 & 54.28 & 66.87 & \textbf{48.79} & 55.69  \\
0.3 & 14.87 & 13.29 & 13.88 & 13.88 & 16.18 & \textbf{12.61} & 14.00 \\
0.4 & 5.20 & 3.55  & \textbf{3.50 } & 4.85 & 5.17 & 4.32 & 4.59 \\
0.5 & 2.53 & 1.64  & \textbf{1.56 } & 2.29 & 2.15 & 2.14 & 2.04 \\
0.6 & 1.23 & 0.94  & \textbf{0.86 } & 1.17 & 1.12 & 1.02 & 1.10 \\
0.7 & 0.74 & 0.52  & \textbf{0.50 } & 0.66 & 0.65 & 0.63 & 0.63 \\
0.8 & 0.57 & 0.39  & \textbf{0.35 } & 0.50 & 0.47 & 0.44 & 0.46 \\
0.9 & 0.37 & 0.26  & \textbf{0.23 } & 0.34 & 0.29 & 0.31 & 0.29 \\ \hline

\end{tabular}%
}
\caption{Solution times for MWISP using different variable ordering strategies, i.e. Base-line (no clustering), Cluster-by-Cluster, Pick-and-Sort, and Pick-and-Sort VO, along with different policies for determination of the number of clusters, i.e. fixed and adaptive.}
\label{sol_time} 
\end{table}

It is evident from the table that all our proposed clustering-based variable ordering frameworks with both policies for setting the number of clusters improve the solution time over almost all instances except instances with density 0.1. More precisely, using the Cluster-by-Cluster strategy within our proposed framework with the number of clusters being set using an adaptive policy significantly reduces the solution time of the DD-based branch-and-bound algorithm across all instances for graphs with densities 0.9 down until 0.4. Moreover, for graph instances with densities 0.3 and 0.2, strategy PaS with having a fixed number of clusters has the best performance across the proposed approaches and is superior to the baseline. Although this approach does not perform better than the baseline for graphs with density 0.1, its performance is relatively close to the baseline. All in all, for graphs with density more than 0.3 all our proposed approaches are superior to the baseline, and for densities 0.3, 0.2, and 0.1 PaS would be the best choice among the proposed approaches,    yielding better results than the baseline in most cases  and being close for one instance class.

\section{Conclusion}

Variable ordering is one of the key heuristic decisions within the DD compilation process, which widely affects the quality of the obtained solution. In this work, we introduced a novel and generic clustering-based variable ordering framework, along with two strategies for selecting variables: (1) Cluster-by-Cluster, (2) Pick-and-Sort. Moreover, motivated by a theoretical study on the size of DDs for MWISP we proposed two different policies for choosing the number of clusters within our framework; (1) adaptive, (2) fixed. Integrated into a DD-based branch-and-bound algorithm, our framework significantly reduces DD-based branch-and-bound solution times for instances of the Maximum Weighted Independent Set Problem (MWISP). These results demonstrate that even simple machine learning techniques can meaningfully enhance exact algorithms. While we only considered MWISP, we believe that the proposed framework, policies, and theory can be generalized and therefore suggests promising potential for broader application to other combinatorial optimization problems. Moreover, depending on the problem one can define different feature vectors for clustering the variables, even for MWISP there are other ways to define features for variables. Furthermore, as long as it goes for sorting criteria one can use different criteria depending on the problem and their preferences. However, PaS-VO is somewhat resistant to a different sorting heuristic as it uses the information gained from the underlying variable ordering heuristic within the framework. In a future work, it would be interesting to define a concept of similarity on the set of variables and then represent it via a simple undirected graph where edges between variables are present if they are similar enough. And then use graph clustering algorithms for clustering of the variables. This would be a highly generic approach that can give the user the chance to define the similarity measurement themselves.

%
%
%
\bibliographystyle{splncs04}
\bibliography{biblio}
%





\end{document}